\newtheorem{theorem}{Theorem}
\newtheorem{corollary}{Corollary}
\begin{document}

\title{Reinforcement Learning for \\  Game-Theoretic Resource Allocation on Graphs}

\author{Zijian An,~\IEEEmembership{Student Member,~IEEE} and Lifeng Zhou*\thanks{* Corresponding author},~\IEEEmembership{Member,~IEEE}
\thanks{The authors are with the Department of Electrical and Computer Engineering, Drexel University, PA 19104 USA (email: za382@drexel.edu, lz457@drexel.edu).}
}


\maketitle

\begin{abstract}

Game-theoretic resource allocation on graphs (GRAG) involves two players competing over multiple steps to control nodes of interest on a graph, a problem modeled as a multi-step Colonel Blotto Game (MCBG). Finding optimal strategies is challenging due to the dynamic action space and structural constraints imposed by the graph. To address this, we formulate the MCBG as a Markov Decision Process (MDP) and apply Reinforcement Learning (RL) methods, specifically Deep Q-Network (DQN) and Proximal Policy Optimization (PPO). To enforce graph constraints, we introduce an action-displacement adjacency matrix that dynamically generates valid action sets at each step. We evaluate RL performance across a variety of graph structures and initial resource distributions, comparing against random, greedy, and learned RL policies. Experimental results show that both DQN and PPO consistently outperform baseline strategies and converge to a balanced $50\%$ win rate when competing against the learned RL policy. Particularly, on asymmetric graphs, RL agents successfully exploit structural advantages and adapt their allocation strategies, even under disadvantageous initial resource distributions.

\vspace{2mm}
\textit{Note to Practitioners:} This work presents an RL-based framework for solving the GRAG, with a particular focus on DQN and PPO as the core learning algorithms. 
The framework is designed to compute optimal strategies for resource allocation in competitive settings constrained by graph topology. The methodology and findings of this study have practical relevance across a wide range of domains where strategic resource allocation under spatial or network constraints is essential. Applications include autonomous surveillance and reconnaissance, where robots gather information in the presence of adversaries; cybersecurity, where attackers and defenders allocate limited resources to protect or compromise sensing and communication assets; and financial markets, where firms strategically bid for customer segments to gain competitive advantages. This framework is especially valuable for practitioners in robotics, control, and networked systems operating in contested, dynamic environments, where robust and adaptive strategies are essential.

\end{abstract}

\section{Introduction}
Recent advances in computing, sensing, and communication technologies have reinforced the central role of resource allocation across a broad spectrum of engineering and automation applications \cite{Ballotta2022ToCO, Fernando2024EnergyEfficientII}. Many areas inherently involve the competitive strategic distribution of limited resources under spatial and structural constraints. Such competitive and resource-constrained scenarios appear across a broad range of domains. In surveillance and reconnaissance tasks~\cite{grocholsky2006cooperative, thakur2013planning, Rabban2020ImprovedRCA, Zhang2021GameTSA}, robotic agents must be strategically deployed to monitor expansive areas while responding to the presence of adversarial forces or contested terrain. In cybersecurity~\cite{Jia2018StackelbergGAA,Alouini2024JammingIIA, Zou2022EquilibriumAAA, Sen2010AnESA, Wilden2022CyberNRA}, defenders and attackers compete by allocating limited resources to either preserve or disrupt network integrity. Similarly, competitive market bidding~\cite{Wang2023HiBidACA, Zhang2023FairnessAwareCBA} involves budget-constrained companies strategically distributing marketing efforts or pricing incentives across customer groups to maximize influence and market share.

The Colonel Blotto Game (CBG) has been widely adopted to model these competitive resource allocation scenarios. Originally introduced by Borel~\cite{borel1921theorie} and further developed in~\cite{borel1953theory}, the CBG captures zero-sum strategic scenarios where two players allocate limited resources across multiple battlefields to maximize territorial gains. Traditionally, the CBG assumes a one-shot interaction, reflecting instantaneous strategic decisions, but its principles naturally extend to sequential or spatially structured competitive environments encountered in real-world automation and control applications.
\begin{figure}
    \centering
    \includegraphics[width=0.9\linewidth]{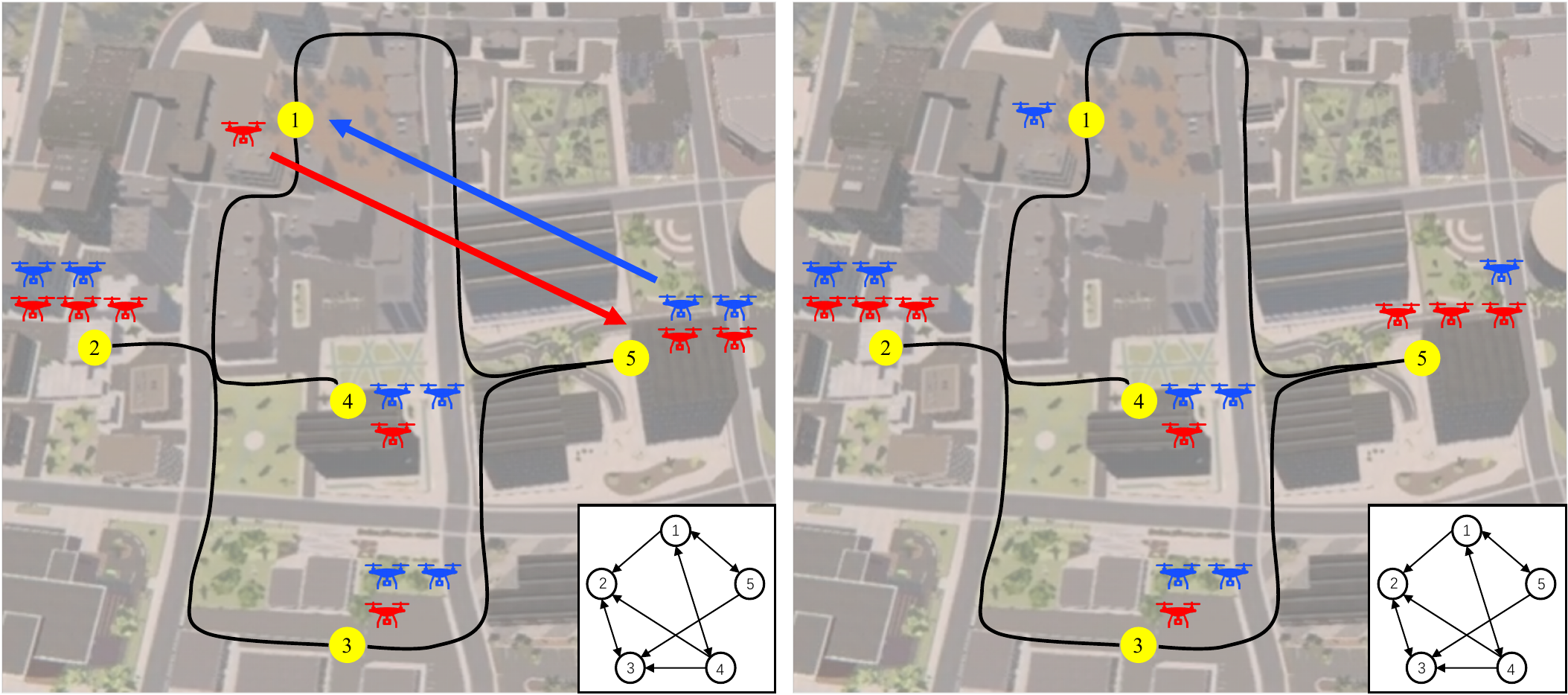}
    \caption{GRAG on a five-node graph involves two players, the red and blue, each with eight units of resources. The players make moves simultaneously. In the left subfigure, the game begins in a tied status. The red has a clear winning strategy: moving a resource from node 1 to node 5. However, the blue player is aware of this strategy and can counter it by moving one resource from node 5 to node 1. As shown in the right subfigure, this counteraction restores a tie, and the game continues and evolves to the next step.
    }
    \label{fig:illus_0}
\end{figure}
In~\cite{shishika2022dynamic} and \cite{an2023double}, the conventional CBG is extended into the CBG on a graph, where two players compete over a networked battlefield. This formulation introduces a graphical constraint that limits resource movement to adjacent nodes. As a result, the game evolves from a single-step allocation problem into a multi-step dynamic process. Starting from an initial resource distribution, both players strategically maneuver their resources across connected nodes over multiple steps until one side secures control of the majority. Due to its ability to capture both the spatial structure and the sequential nature of competitive interactions, the MCBG on graphs serves as a natural and expressive model for GRAG, as illustrated in Figure~\ref{fig:illus_0}.

Reinforcement Learning (RL) has proven to be a powerful tool for computing optimal policies in strategic games. A notable example is the game of Go, where the RL agent AlphaGo Zero achieved superhuman performance through self-play training~\cite{schrittwieser2020mastering,silver2017mastering,silver2018general,lai2015giraffe}. Beyond Go, RL has been applied to various other competitive settings. For instance, \cite{madani2014game} introduced a game theory-reinforcement learning (GT-RL) approach for fair and efficient benefit allocation in multi-operator, multi-reservoir systems. Similarly, \cite{wender2012applying} evaluated RL algorithms, particularly Q-learning and Sarsa variants, for micromanaging combat units in StarCraft: Brood War, demonstrating their effectiveness in optimizing unit control and decision-making in dynamic environments. Additionally, \cite{ye2020towards} proposed a deep RL-based learning paradigm for Multiplayer Online Battle Arena (MOBA) games, successfully training AI agents that outperformed top esports players in Honor of Kings.
Beyond gaming applications, RL has been widely employed in resource allocation problems. \cite{ye2019deep} developed an RL-based mechanism for vehicle-to-vehicle communication, while \cite{cui2019multi} introduced a multi-agent RL framework for dynamic resource allocation in UAV-enabled networks. In \cite{deng2020dynamical}, an RL-based resource allocation system for edge computing optimized service trustworthiness by modeling the resource adjustment process as a Markov Decision Process (MDP) and training policies to maximize service-level agreement (SLA) compliance. Furthermore, \cite{noel2022reinforcement} applied RL to compute optimal strategies for a conventional Blotto game with three nodes and ten resources, achieving a $60\%$ win rate against a random-policy agent.

Building on these advancements, we present the first RL framework designed to solve the GRAG. Specifically, we model GRAG as an on-graph CBG, which we further formulate as a Markov Decision Process (MDP).
A fundamental challenge in leveraging learning-based algorithms for the  GRAG is the principled design of action spaces, state abstractions, and reward structures that guide the learning process toward effective and generalizable policies. An overly detailed state representation may lead to high-dimensional input spaces, increasing training complexity and the risk of overfitting. In addition, overly abstract representations may degrade learning performance due to insufficient information. To this end, we propose a tailored state representation that enables efficient training and improved learning performance. Another challenge in applying RL to  GRAG is that while Deep Neural Networks (DNNs) in RL operate with a fixed structure, the action space varies dynamically at each step due to the graph constraints. To address this, we introduce an action-displacement adjacency matrix to determine valid actions at each step, enabling both players to effectively navigate the dynamic action space.
For evaluation, we assess the strategies trained with DQN and PPO under various initial conditions and across different graph structures, including an analysis of their generalization capabilities. The results demonstrate that both RL-based strategies achieve high win rates, with the DQN-trained strategy exhibiting a certain degree of generalization.

\section{Background}
\subsection{On-Graph Colonel Blotto Game}
\label{subsec:cbg}
First, we model GRAG as an on-graph CBG. A CBG is a two-player constant-sum game in which players allocate resources across several strategic nodes simultaneously \cite{borel1921theorie}. This concept can be extended to the CBG on a graph \cite{shishika2022dynamic,an2023double}, where not all nodes are directly connected. Due to the constraint imposed by the graph structure, resources can only be transferred between connected nodes, which restricts the players' strategies (i.e., action space).
In this paper, we assume that each player has $M$ units of resources. The graph is represented by $G=\{\mathcal{V}, \mathcal{E}\}$, where $\mathcal{V}$ is the node set and $\mathcal{E}$ is the edge set. The graph $G$ contains $N$ nodes, denoted as $|\mathcal{V}|=N$. The connectivity of graph $G$ is described by the adjacency matrix $\mathbf{H}$, where $\mathbf{H}_{ij}=1$ if $(i,j)\in\mathcal{E}$, $\mathbf{H}_{ij}=0$ otherwise. The player resource distribution is represented by $\mathbf{d}$. Each node is assigned a unique ID ranging from 0 to $N-1$, as shown in Figure~\ref{fig:bias}. This ID will be used in Section~\ref{subsec:setup}.

It is important to note that on-graph CBG is a multi-step game, while conventional CBG is a single-step game. In conventional CBG, there is no graph constraint, allowing any desired resource allocation to be achieved in a single step, and the outcome is determined permanently. However, in an on-graph CBG, the graph constraint means that a resource distribution may require multiple steps to achieve or might not be achievable at all. For example, if one wants to move resources from node $i$ to node $j$, but $\mathbf{H}_{ij}=0$ and $\mathbf{H}_{ik}=\mathbf{H}_{kj}=1$, the transfer can only be completed in two time steps: first by moving resources from node $i$ to node $k$, and then from node $k$ to node $j$. This highlights that on-graph CBG explicitly inherits the property of multi-step resource transition.
Therefore, as a multi-step game, on-graph CBG begins with initial resource distributions $\mathbf{d}_0^1, \mathbf{d}_0^2$ for both players. Starting from the initial state, players adjust their resource allocations across connected nodes. The current allocation affects the possible allocations in the next step. This requires calculating the valid allocation space (referred to as the action space hereafter) at each game step. A primary challenge in applying RL to on-graph CBG is the fixed structure of the DNN conflicting with the dynamically changing valid action space at each step. The dimension of the DNN output is pre-defined in RL. In DQN, the outputs are each action's Q value given the current state, and in PPO, the outputs are the probability of taking each action given the current state. However, the graph constraint results in a dynamically changing action space, which in turn necessitates a flexible output dimension.
To resolve this, we introduce a novel action-displacement adjacency matrix that selectively filters the DNN's output, ensuring that only valid actions are considered at each game step.


Moreover, since the distribution at the next step only depends on the distribution at the current step, but not the distributions at previous steps, which indicates that the on-graph CBG satisfies the Markov property and thus can be modeled as an MDP.

\subsection{Reinforcement Learning for  GRAG}
Since both players execute actions simultaneously from distinct action spaces, which, due to the graph constraint, vary with the current resource distribution, the state transition function $P(\mathbf{s}_{t+1}|\mathbf{s}, \mathbf{a})$ for an individual player cannot be explicitly expressed. 
Therefore, we adopt model-free reinforcement learning approaches: the value-based method Deep Q-Network (DQN) and the policy-gradient method Proximal Policy Optimization (PPO) to address the  GRAG.

\textbf{DQN} is a widely used model-free value-based RL approach \cite{mnih2015human}. 
The core of DQN lies in approximating the traditional Q-value function using a deep neural network, known as the Q-Network. The Q-value function $Q(\mathbf{s}, \mathbf{a})$ represents the expected cumulative reward obtained by taking action $\mathbf{a}$ from state $\mathbf{s}$. The primary objective of DQN is to find a Q-value function such that the Q-values under the current policy closely approximate the true expected rewards. We describe how we apply the DQN framework for solving the  GRAG in Section~\ref{subsec:dqn}.

\textbf{PPO} is a popular model-free policy gradient RL approach \cite{schulman2017proximal}. The objective of PPO is to optimize the policy function $\pi_\theta(\mathbf{a}|\mathbf{s})$ to maximize the cumulative reward. It is fundamentally based on the policy gradient method. When updating the policy parameter $\theta$, PPO employs a clipped objective function to stabilize the optimization process, restricting the extent of policy updates and preventing large policy changes that could lead to instability. We describe how we apply the PPO framework for solving the GRAG in Section~\ref{subsec:ppo}.

\section{Approach}
We begin by outlining the RL framework for solving the GRAG, including key conventions such as player states, observations, rewards, and actions. Additionally, we define the game progression mechanics and the conditions for determining the winner (Section~\ref{subsec:setup}).
Since the graph may not always be fully connected, the set of valid actions at each step is constrained and dynamically changes based on the player's current distribution. This necessitates a well-structured action mechanism to generate a dynamic action set. To address this, we introduce our approach for constructing the valid action set in Section~\ref{subsec:mask}.
Building on this foundation, we describe the training processes for DQN and PPO in Sections~\ref{subsec:dqn} and \ref{subsec:ppo}, detailing how these methods learn optimal allocation policies.
\subsection{RL Conventions}
\label{subsec:setup}
We consider a CRAG on an $N$-node graph with each player owning a total of $M$ resources. The CRAG (CBG) can be modeled as a Markov Decision Process $\text{MDP}:=(\mathcal{S}, \mathcal{A}, \mathcal{P}, R, \gamma)$ with $\mathcal{S}$ the state space, $\mathcal{A}$ the action space, $\mathcal{P}$ the state transition probability, $R$ the reward, and $\gamma$ the discount factor. We explain these conventions in detail as follows.
\begin{itemize}
    \item $\mathbf{d}_i\in\mathcal{D}_i, i\in\{1,2\}$ denotes the resource distribution of each of the two players. For an $N$-node graph, $\mathbf{d}_i$ is an N-dimensional vector where each element represents the amount of resources allocated to the corresponding node.
    \item $\mathcal{S}:=\{\mathbf{s}\in\mathbb{R}^N|\ \mathbf{s}=\mathbf{d}_1-\mathbf{d}_2, \mathbf{d}_i\in\mathcal{D}_i\}$ is the state space. The motivation for defining the state as the resource difference between two players is twofold. First, the state includes information about the resource distributions of both players. It is essential in a two-player game that the state reflects shared information from both players. Decisions made based on this shared information are more reasonable and balanced, whereas relying on only one player’s distribution would lead to incomplete or biased action selection. Second, compared to directly including the two players' resource distributions, $(\mathbf{d}_t^1, \mathbf{d}_t^2)$ as the state, using $\mathbf{d}_t^1-\mathbf{d}_t^2$ reduces the dimensionality of the state from $|\mathcal{D}^2|$ to $|\mathcal{D}|$. This dimensionality reduction significantly accelerates the learning process. In both DQN and PPO, the state serves as the network input. High-dimensional inputs can impose excessive demands on the depth of the network, which may lead to longer convergence times or even overfitting, ultimately degrading training performance.
    \item $\mathcal{A}:=\{\mathbf{a}\in \mathbb{R}^M\ |\ a_i\in\mathcal{I}(N)\}$ denotes each player's action space with $\mathcal{I}(N):=\{0,1,2,\cdots, N-1\}$. 
    Given the total amount of resource is $M$, each entry of the action, $\mathbf{a}_i$, corresponds to the action for each resource (resource action). Thus, the action $\mathbf{a}$ is a vector with $M$ entries. 
    Given that each node is assigned an ID from 0 to $N-1$ as defined in Section \ref{subsec:cbg}, each entry in the action vector is interpreted as follows: $``0"$ denotes staying still, $``1"$ represents moving to the node whose ID is 1 greater than the current node's ID, $``2"$ represents moving to the node whose ID is 2 greater than the current node's ID, and so on.  When the target node ID exceeds $N-1$, it wraps around using modulo $N$, ensuring that all node IDs remain within the range $0$ to $N-1$. 
    Therefore, the action space contains $N^M$ entries. 
    
    However, it is important to note that for a GRAG, not all of these $N^M$ entries are valid. Due to the graph constraint, the action space is restricted within a certain range. For example, if node $i$ and node $j$ are not connected, any action involving resource transition from node $i$ to node $j$ would be invalid. Therefore, it is necessary to identify a subset of the action space that contains all valid actions. Moreover, this valid action set depends on the current resource distribution and changes dynamically as the players' resource distributions evolve. Defining the player's action space based on the actions for each resource simplifies the computation of the valid action set, as demonstrated in Section~\ref{subsec:mask}.
    
    \item $P_{ss'}^a=P(S_{t+1}=\mathbf{s}'|\ S_{t}=\mathbf{s}, A_t=\mathbf{a})$ denotes the player's state transition probability at time $t$ when taking action $\mathbf{a}$ at state $\mathbf{s}$.
    \item $R_i(S_{t}=\mathbf{s})$ represents player $i$'s reward at state $\mathbf{s}$. If player $i$ controls more nodes than the other player at state $\mathbf{s}$, then $R_i(S_{t}=\mathbf{s})=1$. If player $i$ controls fewer nodes, $R_i(S_{t}=\mathbf{s})=-1$. If both players control the same number of nodes, $R(S_{t+1}=\textbf{s}')=0$. Formally, we have
        \begin{equation}
            R(S_{t}=\textbf{s})=
                \begin{cases} 
                1,&\!\!\!\!\!\text{if} \sum_i^N\mathbb{I}(\mathbf{s}_i>0) >  \sum_i^N\mathbb{I}(\mathbf{s}_i<0), \\
                0,&\!\!\!\!\!\text{if} \sum_i^N\mathbb{I}(\mathbf{s}_i>0) =  \sum_i^N\mathbb{I}(\mathbf{s}_i<0), \\
                -1,&\!\!\!\!\!\text{if} \sum_i^N\mathbb{I}(\mathbf{s}_i>0) <  \sum_i^N\mathbb{I}(\mathbf{s}_i<0),
                \end{cases}
        \label{eq:reward}
        \end{equation}
    where $\mathbb{I}(\cdot)$ is an indicator such that, $\mathbb{I}(x>0)=1$ if $x>0$ and $\mathbb{I}(x>0)=0$ otherwise. $\sum_i^N\mathbb{I}(\mathbf{s}_i>0) >  \sum_i^N\mathbb{I}(\mathbf{s}_i<0)$ means that there are more positive elements than negative elements in vector $\mathbf{s}$, therefore Player 1 claims the winner and obtains the reward of 1.
\end{itemize}

\subsection{Valid Action Set} 
\label{subsec:mask}
For a GRAG, the range of valid actions for the player at time $t$ is constrained and dynamically changing based on the player's distribution $\mathbf{d}_t$ and the graph $G$ with adjacency matrix $\mathbf{H}$. 
To calculate valid action set at time $t$, we construct \textit{action-displacement adjacency matrix}, denoted as $\mathbf{J}_{t}$. $\mathbf{J}_{t}$ can be computed using the adjacency matrix $\mathbf{H}$ and the resource distribution $\mathbf{d}_t$ at time $t$. For adjacency matrix $\mathbf{H}$, $\mathbf{H}_i$ denotes its $i$-th row and $\mathbf{H}_i^{j}$ denotes the left circular shift of $i$-th row's elements by $j$ times. For example, if $\mathbf{H}_i=\mathbf{H}_i^0=(1,2,3,4)$, $\mathbf{H}_i^1=(2,3,4,1)$ and $\mathbf{H}_i^2=(3,4,1,2)$. Consider that the resource distribution at time $t$ is $\mathbf{d}_t=(m_1, m_2, \cdots, m_N)$ with $\sum_{i=1}^N m_i=M$. Action-displacement adjacency matrix $\mathbf{J}_t$ is formed by concatenating $m_i$ copies of $\mathbf{H}_i^{i-1}$ for $i$ from $1$ to $N$, and thus $\mathbf{J}_t$ is a $M\times N$ dimensional matrix. For example, if $\mathbf{d}_t=(3,0,1,2)$, (meaning total resource amount $M=6$) and the adjacency matrix is
\begin{equation*}
    \mathbf{H}=\begin{pmatrix}
    1&0&1&1\\
    1&1&1&1\\
    1&1&0&1\\
    0&0&1&1
    \end{pmatrix}.
    \label{eq:adjacency}
\end{equation*}
Then the action-displacement adjacency matrix at time $t$ is
\begin{equation}
    \mathbf{J}_t = \begin{pmatrix}
        \vspace{1mm}
        H_1^0\\
        \vspace{1mm}
        H_1^0\\
        \vspace{1mm}
        H_1^0\\
        \vspace{1mm}
        H_3^2\\
        \vspace{1mm}
        H_4^3\\
        \vspace{1mm}
        H_4^3
    \end{pmatrix}
    = \begin{pmatrix}
        1&0&1&1\\
        1&0&1&1\\
        1&0&1&1\\
        0&1&1&1\\
        0&1&1&0\\
        0&1&1&0
    \end{pmatrix}.
    \label{eq:J}
\end{equation}
To simplify notation, we denote $\mathbf{J}_t$ as $\mathbf{J}$ in the subsequent discussion.

First, we assign unique IDs to resources ranging from 0 to $M-1$, ensuring that resources in nodes with smaller IDs receive lower-numbered IDs. For example, for distribution $(3,0,1,2)$, the three resources in node 0 are assigned IDs 0, 1, and 2; the resource in node 2 is assigned ID 3; and the two resources in node 3 are assigned IDs 4 and 5. The row index of $\mathbf{J}$ corresponds to the resource index (from 0 to $M-1$) and the column index of $\mathbf{J}$ corresponds to the resource action (from 0 to $N-1$). Each entry in 
$\mathbf{J}$ indicates whether a particular action is valid. For example, in Eq.~\ref{eq:J}, $\mathbf{J}_{1,2}=1$ signifies that resource $1$ (located on node 0) can take action $2$, while $\mathbf{J}_{2,1}=0$ indicates that resource $2$ (also on node 0) can not take action $1$. A player's valid action $\mathbf{a}$ consists of the valid actions of all resources. Therefore, a valid action set can be formulated as 
\begin{equation}
\label{eq:reachableset1}
\mathcal{A}=\{\mathbf{a}\in\mathbb{R}^M\ |\ \mathbf{J}_{i,\mathbf{a}_i}=1, i\in\mathcal{I}(M)\}.
\end{equation}
For example, action $\mathbf{a}=(0,2,2,3,1,2)$ is a valid action since for all 
$i\in\mathcal{I}(M)$, $\mathbf{J}_{i,\mathbf{a}_i}=1$. However, action $\mathbf{a}=(0,1,2,3,1,2)$ is invalid as $\mathbf{J}_{1,\mathbf{a}_1}=\mathbf{J}_{1,1}=0$. This occurs because, under the distribution $(3,0,1,2)$, selecting $\mathbf{a}_1=1$  implies that the player attempts to move resource 2 (located at node 0) to node 2. However, since $\mathbf{H}_{0,1}=0$, there is no direct path from node 1 to node 2, making this action infeasible.

We formally state that all actions in $\mathcal{A}$ (Eq.~\ref{eq:reachableset1}) are valid actions in Theorem \ref{the:1}.
\begin{theorem} For any resource $j$ on node $n_j$ at time step $t$, $j\in\mathcal{I}(M)$ and $n_j\in\mathcal{I}(N)$, $\mathbf{a}_j\in\mathcal{K}:=\{k\in\mathcal{I}(N)\ |\ \mathbf{J}_{jk}=1\}$ is valid resource action.
\label{the:1}
\end{theorem}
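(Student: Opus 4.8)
The plan is to track, for a single resource $j$ located on node $n_j$, exactly which row of the matrix $\mathbf{J}$ governs its options, and then to check that the columns in which that row equals $1$ are precisely the graph-feasible destinations for $j$. Everything reduces to unrolling two definitions: the construction of $\mathbf{J}_t$ and the encoding of resource actions from Section~\ref{subsec:setup}.

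First I would pin down the row of $\mathbf{J}$ associated with resource $j$. By construction, $\mathbf{J}_t$ lists, for each node $n$ taken in increasing ID order, $m_n$ identical copies of the shifted adjacency row of node $n$ (namely $\mathbf{H}_{n+1}^{\,n}$ in the paper's $1$-indexed row notation, where row $n+1$ of $\mathbf{H}$ is the adjacency row of node $n$). Since resource IDs are assigned so that the resources sitting on node $n$ form a contiguous block, and this block is in bijection with the block of copies of $\mathbf{H}_{n+1}^{\,n}$, the $j$-th row of $\mathbf{J}$ is exactly $\mathbf{H}_{n_j+1}^{\,n_j}$. I expect this bookkeeping step — matching the resource-ID blocks to the row-copy blocks, with the correct off-by-one in the shift amount — to be the part that requires the most care.

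Next I would unwind the left circular shift and compare against the action semantics. By definition of $\mathbf{H}_i^{\,s}$, the entry in column $k$ of $\mathbf{H}_{n_j+1}^{\,n_j}$ equals the entry in column $(k+n_j)\bmod N$ of the unshifted adjacency row of $n_j$, so $\mathbf{J}_{jk}=\mathbf{H}_{n_j,\,(n_j+k)\bmod N}$. On the other hand, by the action encoding, setting $\mathbf{a}_j=k$ moves resource $j$ from $n_j$ to the node with ID $(n_j+k)\bmod N$, and this transfer is permitted by the graph — i.e.\ $\mathbf{a}_j=k$ is a valid resource action — precisely when $\mathbf{H}$ records an edge (or self-loop) from $n_j$ to that node, i.e.\ when $\mathbf{H}_{n_j,\,(n_j+k)\bmod N}=1$. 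Chaining the two equivalences yields $\mathbf{a}_j=k$ valid $\iff \mathbf{J}_{jk}=1$, which is exactly $\mathbf{a}_j\in\mathcal{K}$. Since the argument is uniform in $j\in\mathcal{I}(M)$, applying it to every coordinate recovers the characterization of the full valid action set $\mathcal{A}$ in Eq.~\ref{eq:reachableset1}; the only genuine subtlety throughout is the consistent translation between the $0$-indexed node and resource IDs and the $1$-indexed rows of $\mathbf{H}$, together with the off-by-one in the shift.
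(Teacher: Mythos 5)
Your proof is correct and follows essentially the same route as the paper: both arguments reduce to the identity $\mathbf{J}_{jk}=\mathbf{H}_{n_j,(n_j+k)\bmod N}$ and then match this against the action semantics ``$\mathbf{a}_j=k$ moves resource $j$ to node $(n_j+k)\bmod N$.'' The only difference is presentational --- the paper derives the row extraction via a permutation matrix $\mathbf{P}$ and a lifting operator $\Phi$ so that $\mathbf{J}=\Phi(\mathbf{d})\cdot\mathbf{H}'$, whereas you read the same row directly off the block-concatenation definition of $\mathbf{J}_t$; your handling of the $0$/$1$-indexing and the shift amount is consistent with the paper's, and you also capture the converse direction that the paper notes separately.
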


\begin{proof}

We show that for any resource \( j \in \{0, \ldots, M-1\} \), located at node \( n_j \in \{0, \ldots, N-1\} \), a displacement \( a_j \in \mathcal{I}(N) \) is valid if and only if it corresponds to a reachable node according to the graph topology encoded in the adjacency matrix \( \mathbf{H} \in \{0,1\}^{N \times N} \).

We define the left cyclic permutation matrix \( \mathbf{P} \in \{0,1\}^{N \times N} \), whose entries are given by:
\begin{equation}
\mathbf{P}_{ij} = 
\begin{cases}
1, & \text{if } j \equiv i - 1 \pmod{N}, \\
0, & \text{otherwise}.
\end{cases}
\end{equation}
This matrix satisfies \( \mathbf{e}_k \mathbf{P} = \mathbf{e}_{(k+1)\bmod N} \), and right-multiplying a row vector by \( \mathbf{P}^i \) performs a left cyclic shift by \( i \) positions.

Using this operator, we construct the \textit{displacement-augmented adjacency matrix} \( \mathbf{H}' \in \{0,1\}^{N \times N} \) as:
\begin{equation}
\mathbf{H}' = \sum_{i=0}^{N-1} \mathrm{diag}(\mathbf{e}_i) \cdot \mathbf{H} \cdot \mathbf{P}^i.
\end{equation}
This definition ensures that the \( i \)-th row of \( \mathbf{H} \) is cyclically left-shifted by \( i \) positions, and the result placed back into the \( i \)-th row of \( \mathbf{H}' \).

Let \( \mathbf{d} = (d_0, \ldots, d_{N-1}) \in \mathcal{I}(N)^N \) be the node-level resource distribution with \( \sum_i d_i = M \) total resources. We define a \textit{lifting operator} \( \Phi: \mathcal{I}(N)^N \to \{0,1\}^{M \times N} \) which expands \( \mathbf{d} \) into a row-wise stack of one-hot vectors. In particular, if resource \( j \) is located at node \( n_j \), then the \( j \)-th row of \( \Phi(\mathbf{d}) \) equals the canonical basis vector \( e_{n_j} \in \mathbb{R}^N \). With a lifting operator, we have a matrix form resource distribution $\hat{\mathbf{d}}=\Phi(\mathbf{d})$. 
Based on that, we define the valid action matrix as:
\begin{equation}
\mathbf{J} := \hat{\mathbf{d}} \cdot \mathbf{H}'.
\end{equation}
To isolate the valid actions for resource \( j \), we compute the \( j \)-th row of \( \mathbf{J} \) as:
\begin{align*}
\mathbf{J}_{j,\cdot} &= \mathbf{e}_{n_j}^\top \cdot \mathbf{H}' \\
&= \sum_{i=0}^{N-1} \mathbf{e}_{n_j}^\top \cdot \mathrm{diag}(\mathbf{e}_i) \cdot \mathbf{H} \cdot \mathbf{P}^i \\
&= \mathbf{e}_{n_j}^\top \cdot \mathrm{diag}(\mathbf{e}_{n_j}) \cdot \mathbf{H} \cdot \mathbf{P}^{n_j} \\
&= H_{n_j,\cdot} \cdot \mathbf{P}^{n_j}.
\end{align*}
Here, the final expression represents the \( n_j \)-th row of \( \mathbf{H} \) left-shifted by \( n_j \) positions. Therefore, the \( k \)-th element of this row equals:
\begin{equation}
\mathbf{J}_{j,k} = H_{n_j, (n_j + k) \bmod N}.
\end{equation}
Thus, \( \mathbf{J}_{j,k} = 1 \) if and only if the original adjacency matrix \( \mathbf{H} \) allows a transition from node \( n_j \) to node \( (n_j + k) \bmod N \). This completes the proof.
\end{proof}

Since all the proof processes are reversible, conversely, all valid actions $k$ satisfy $\mathbf{J}_{jk}=1$ for resource $j$. 
In Figure~\ref{fig:illus_2}, calculation \textcircled{1} illustrates the process of generating the valid action space from the action-displacement adjacency matrix (Eq.~\ref{eq:reachableset1}).

\textit{Theorem}~\ref{the:1} states that for each resource $j$, its valid action set consists of the column indices of all the entries of ``1'' in the corresponding $j$-th row of $\mathbf{J}$. 
Each resource selects a valid action, and the combination of these actions constitutes a valid action $\mathbf{a}=(a_0,a_1,\cdots, a_{M-1})$ for a player. 
Therefore, the valid action set is $\mathcal{A}$ in Eq.~\ref{eq:reachableset1}.

\begin{theorem}
    If a player resource distribution at time $t$ is $\mathbf{d}$, and she takes action $\mathbf{a}_t=(a_0,\cdots,a_{M-1})$ and ends up with $\mathbf{d}'$, then
    \begin{equation}
        \mathbf{d}' = \sum_{j=0}^{M-1} \hat{\mathbf{d}}_{j,:} \cdot \mathbf{P}^{a_j}.
        \label{eq:the2}
    \end{equation}
    \label{the:2}
\end{theorem}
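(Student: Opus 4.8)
The plan is to read Eq.~\ref{eq:the2} coordinatewise and verify that both sides count the same object: the number of the player's resources that occupy each node after the displacement $\mathbf{a}_t=(a_0,\dots,a_{M-1})$ has been applied. First I would recall the constructions used in the proof of Theorem~\ref{the:1}. If resource $j\in\mathcal{I}(M)$ currently sits on node $n_j\in\mathcal{I}(N)$, then by definition of the lifting operator $\Phi$ the $j$-th row of $\hat{\mathbf{d}}=\Phi(\mathbf{d})$ is the canonical basis row vector, $\hat{\mathbf{d}}_{j,:}=\mathbf{e}_{n_j}^\top$. Next I would invoke the action semantics from Section~\ref{subsec:setup}: the entry $a_j$ instructs resource $j$ to move from node $n_j$ to the node whose ID is $(n_j+a_j)\bmod N$, the modulo reflecting the wrap-around of node IDs. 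Finally, using the cyclic-advance property of $\mathbf{P}$ established in the proof of Theorem~\ref{the:1}, namely $\mathbf{e}_k^\top\mathbf{P}=\mathbf{e}_{(k+1)\bmod N}^\top$, a short induction on $a_j$ gives $\mathbf{e}_{n_j}^\top\mathbf{P}^{a_j}=\mathbf{e}_{(n_j+a_j)\bmod N}^\top$; since $\mathbf{P}^N=\mathbf{I}$, this is consistent with the modular indexing and with $a_j$ ranging over $\mathcal{I}(N)$.

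With these pieces in place the assembly is immediate. By definition of the post-action distribution, for each node $v\in\mathcal{I}(N)$ the entry $\mathbf{d}'_v$ equals the number of resources whose destination node is $v$, that is, $\mathbf{d}'_v=\sum_{j=0}^{M-1}\mathbb{I}\bigl[(n_j+a_j)\bmod N = v\bigr]=\sum_{j=0}^{M-1}\bigl(\mathbf{e}_{(n_j+a_j)\bmod N}^\top\bigr)_v$, because a one-hot vector placed at the destination node is exactly the per-resource contribution to the histogram. Substituting the identity $\mathbf{e}_{n_j}^\top\mathbf{P}^{a_j}=\mathbf{e}_{(n_j+a_j)\bmod N}^\top$ yields $\mathbf{d}'_v=\sum_{j=0}^{M-1}\bigl(\mathbf{e}_{n_j}^\top\mathbf{P}^{a_j}\bigr)_v=\sum_{j=0}^{M-1}\bigl(\hat{\mathbf{d}}_{j,:}\,\mathbf{P}^{a_j}\bigr)_v$. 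Since this holds for every coordinate $v$, Eq.~\ref{eq:the2} follows. (If one prefers a fully matrix-level argument, the same computation can be written as $\mathbf{d}'=\mathbf{1}_M^\top\,\mathrm{diag}$-free row-stacking of the shifted one-hots, but the coordinatewise version is cleanest.)

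I do not anticipate a genuine obstacle; the statement is essentially bookkeeping once Theorem~\ref{the:1}'s machinery is available. The one point that deserves care is reconciling the two descriptions of $\mathbf{d}'$: the physical one (each resource hops along its chosen displacement and we re-count occupancies) and the algebraic one (a sum of cyclically shifted one-hot rows). Making the bridge explicit — writing $\mathbf{d}'$ as a sum of destination indicators and observing that a sum of one-hot vectors is precisely that histogram — closes the gap. A secondary detail worth a sentence is well-definedness: $\mathbf{P}^{a_j}$ makes sense for all $a_j\in\{0,\dots,N-1\}$, and the relation $\mathbf{P}^N=\mathbf{I}$ guarantees the cyclic structure matches the modulo-$N$ node indexing, so no action can map a resource outside $\mathcal{V}$.
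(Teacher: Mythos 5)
Your proposal is correct and follows essentially the same route as the paper's proof: lift $\mathbf{d}$ to one-hot rows via $\Phi$, shift each row by $\mathbf{P}^{a_j}$ to obtain the destination one-hot, and sum over resources to recover the node-level occupancy. The only difference is that you make explicit the coordinatewise histogram argument and the identity $\mathbf{e}_{n_j}^\top\mathbf{P}^{a_j}=\mathbf{e}_{(n_j+a_j)\bmod N}^\top$, which the paper asserts without elaboration.
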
  

    
\begin{proof}
Let \( \hat{\mathbf{d}} = \Phi(\mathbf{d}) \in \{0,1\}^{M \times N} \) be the resource-level position matrix at time \( t \), where each row is a one-hot vector indicating the current location of resource \( j \).

For each resource \( j \), its location vector \( \hat{\mathbf{d}}_{j,:} \) is shifted left by \( a_j \) positions via right multiplication with \( \mathbf{P}^{a_j} \), resulting in a new one-hot vector indicating its new location. Aggregating the shifted vectors of all resources yields the updated node-level distribution:
\begin{equation*}
\mathbf{d}' = \sum_{j=0}^{M-1} \hat{\mathbf{d}}_{j,:} \cdot \mathbf{P}^{a_j}.
\end{equation*}
This completes the proof.
\end{proof}

In Figure~\ref{fig:illus_2}, calculation \textcircled{2} represents Eq.~\ref{eq:the2}. At this point, we determine the valid action set $\mathcal{A}_t$ based on the current distribution $\mathbf{d}_t$, as well as how to compute the distribution at the next time step $\mathbf{d}_{t+1}$ based on the action $\mathbf{a}_t$ and the current distribution $\mathbf{d}_t$.
\begin{figure*}
    \centering
    \includegraphics[width=0.95\linewidth]{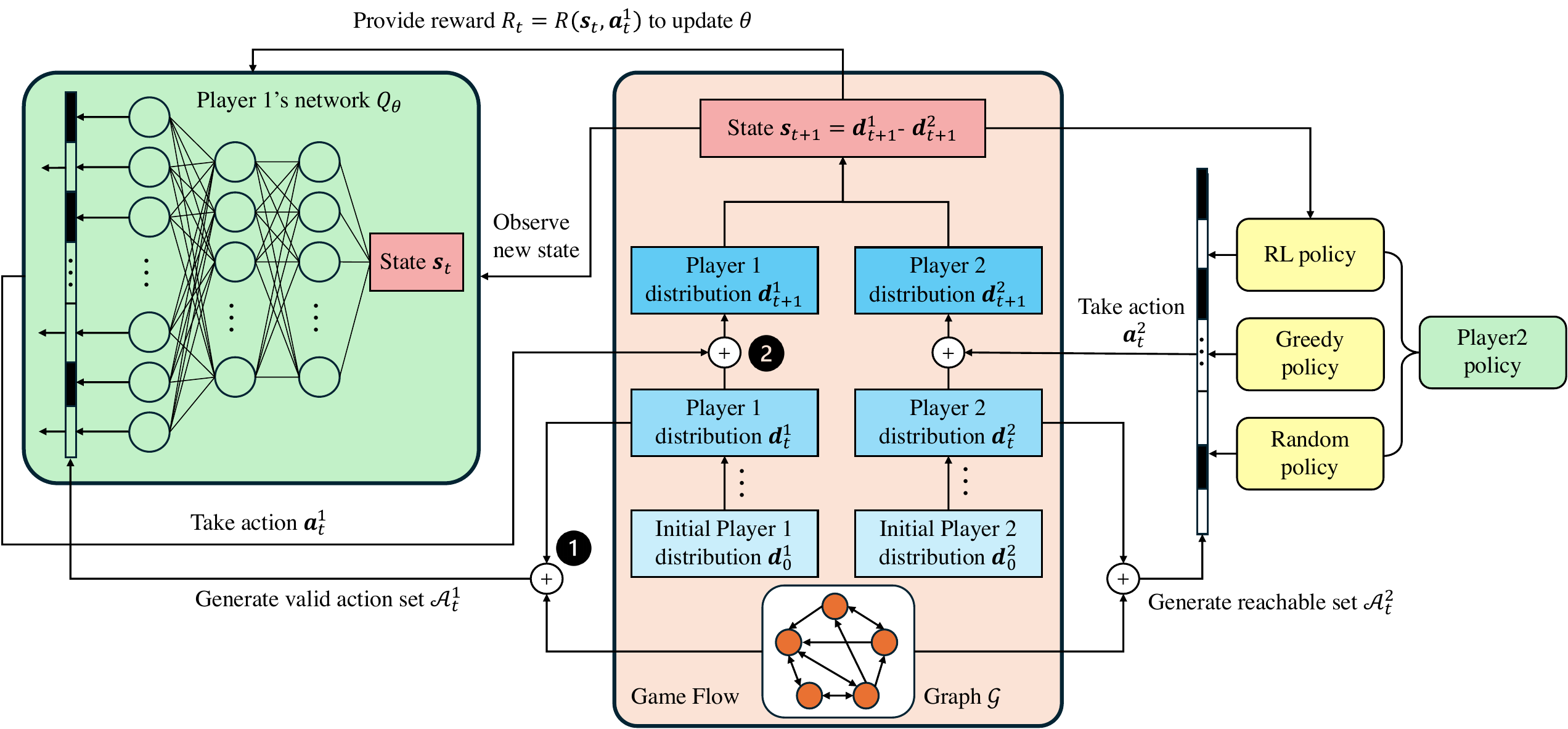}
    \caption{Game flow and DQN process with Player 1 as the RL agent.}
    \label{fig:illus_2}
\end{figure*}
\subsection{Action-displacement adjacency matrix with different resources}
\label{subsec:diff-res}
In this study, we primarily focus on scenarios where both players have the same total amount of resources. However, to further demonstrate the capability of RL in handling  GRAG, we will explore cases with unequal total resources in subsequent experiments.
Most RL APIs require that the action space dimensions of both players be consistent during setup. When the total resources of the two players differ, this results in differing action space dimensions ($N^{M_1}\neq N^{M_2}$). Below, we first unify the action space dimensions for scenarios with unequal total resources. Based on this unified action space, we will then describe how to generate the valid action set and compute the distribution at the next time step from the action and current distribution. Due to symmetry, we assume $M_1<M_2$ in the following discussion and denote $M_2=M$ for simplicity.

With both players' action space dimensions are unified as $N^M$, we define Player 1's action $\mathbf{a}_1=(a_0,\cdots,a_{M-1})$ as an $M$-dimensional vector. The first $M-M_1$ components of $\mathbf{a}_1$ corresponding to the actions for the added virtual resources, and these components are constrained to take the value 0. The action-adjacency matrix of Player 1, $\mathbf{J}_1$, becomes 
\begin{equation}
    \mathbf{J}'_1 = \begin{pmatrix}
        \vspace{0.5mm}
        \mathbf{1}^T_{M-M_1},\mathbf{0}\\
        \vspace{0.5mm}
        \mathbf{J}_1
    \end{pmatrix},
\end{equation}
and the valid action set of Player 1 is generated from $\mathbf{J}'_1$ similar in Section~\ref{subsec:mask}. As for the relationship of $\mathbf{d}_t$, $\mathbf{d}_{t+1}$ and $\mathbf{a}_t$, we have \textit{Corollary}~\ref{cor1} based on \textit{Theorem}~\ref{the:2},
\begin{corollary}
    If a player resource distribution at time $t$ is $\mathbf{d}$, and she takes action $\mathbf{a}_t=(a_0,\cdots,a_{M-1})$ with first $M-M_1$ components being actions of virtual resources, and ends up with $\mathbf{d}'$,
    \begin{equation}
        \mathbf{d}' = \sum_{j=0}^{M_1-1} \hat{\mathbf{d}}_{j,:}\cdot\mathbf{P}^{a_{j+M-M_1}}.
        \label{eq:cor1}
    \end{equation}

    \label{cor1}
\end{corollary}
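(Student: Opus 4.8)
The plan is to derive Corollary~\ref{cor1} as a direct specialization of Theorem~\ref{the:2}, treating the virtual resources as a degenerate case that contributes nothing to the node-level distribution. First I would recall the setup: Player~1 genuinely owns $M_1$ resources but the action space has been padded to dimension $N^M$ by prepending $M - M_1$ virtual resources, whose action components $a_0, \ldots, a_{M-M_1-1}$ are forced to be $0$. The matrix form of Player~1's true resource distribution is $\hat{\mathbf{d}} = \Phi(\mathbf{d}) \in \{0,1\}^{M_1 \times N}$, with rows indexed $j = 0, \ldots, M_1 - 1$ giving the one-hot location of each real resource.

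Next I would apply Theorem~\ref{the:2} to the padded system. If we were to write out the full $M$-row lifted matrix $\hat{\mathbf{d}}'$ including the virtual rows, each virtual row would be a zero row (the virtual resources occupy no node), so the sum $\sum_{j=0}^{M-1} \hat{\mathbf{d}}'_{j,:} \cdot \mathbf{P}^{a_j}$ collapses: the first $M - M_1$ terms vanish because $\hat{\mathbf{d}}'_{j,:} = \mathbf{0}$ for $j < M - M_1$ (equivalently, even if one insisted on $\mathbf{P}^{a_j} = \mathbf{P}^0 = \mathbf{I}$, multiplying the zero vector gives zero). The surviving terms are those with $j \geq M - M_1$, i.e.\ the real resources. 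Re-indexing with $j' = j - (M - M_1)$ so that $j'$ runs from $0$ to $M_1 - 1$, the real resource $j'$ sits in row $\hat{\mathbf{d}}_{j',:}$ of the unpadded matrix and is acted on by $a_{j' + M - M_1}$, which yields exactly Eq.~\ref{eq:cor1}.

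The only genuine bookkeeping step is the index alignment: confirming that, under the stated convention that the first $M - M_1$ components of $\mathbf{a}_t$ belong to the virtual resources, real resource $j$ (for $j \in \{0,\ldots,M_1-1\}$) corresponds to action component $a_{j + M - M_1}$, and that the ID-assignment rule from Section~\ref{subsec:mask} (smaller node IDs get lower resource IDs) is consistent with placing the real resources in the last $M_1$ rows. I would state this alignment explicitly and then invoke Theorem~\ref{the:2}. I do not expect a real obstacle here; the corollary is essentially Theorem~\ref{the:2} with the zero rows pruned, and the proof is a two-line reduction plus an index shift. If anything, the subtle point worth a sentence is noting that the virtual rows contribute the zero vector regardless of their (constrained-to-zero) action values, so the padding is genuinely inert.
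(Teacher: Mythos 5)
Your proposal is correct and matches the paper's intended route: the paper gives no explicit proof and simply states the corollary ``based on Theorem~\ref{the:2},'' i.e.\ as the direct specialization you carry out, with the real resources acted on by the last $M_1$ action components and the virtual padding contributing nothing. Your explicit handling of the index shift $j \mapsto j + M - M_1$ and the observation that the virtual rows are inert fill in exactly the bookkeeping the paper leaves implicit.
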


\subsection{DQN Framework}
\label{subsec:dqn}
We outline the steps of using DQN for the  GRAG in Figure~\ref{fig:illus_2}. The overall process begins at the bottom of the middle portion of Figure~\ref{fig:illus_2}, where the initial resource distributions of the two players, \(\mathbf{d}_0^1\) and \(\mathbf{d}_0^2\), are established. Player 1 is modeled as an RL agent, whereas Player 2 can follow a random policy, a fixed heuristic policy, or an RL-based policy, as described in Section~\ref{subsec:greedy}.

At each time step \(t\), the valid action sets \(\mathcal{A}_t^1\) and \(\mathcal{A}_t^2\) are computed for both players based on their current resource configurations and the adjacency matrix \(\mathbf{H}\), using Eq.~\ref{eq:reachableset1}. The current state is defined as the difference between the two players' distributions, \(\mathbf{s}_t = \mathbf{d}_t^1 - \mathbf{d}_t^2\), and the reward signal \(R_t\) for Player 1 is determined by Eq.~\ref{eq:reward}.

Player 1 selects an action \(\mathbf{a}_{t+1}^1\) from its valid action set \(\mathcal{A}_t^1\) using the $\epsilon$-greedy strategy \cite{tokic2010adaptive}: with probability \(\epsilon\), a random valid action is chosen to encourage exploration, and with probability \(1 - \epsilon\), the action with the highest Q-value is selected based on the current Q-network $Q_\theta$. Player 2's action \(\mathbf{a}_{t+1}^2\) is determined by its predefined policy.

The episode terminates when one player wins the game, which is indicated by a non-zero reward \(R_t \neq 0\). During training, the tuple \((\mathbf{s}_t, \mathbf{a}_t, \mathbf{s}_{t+1}, R_t)\) is stored in a replay buffer $\mathcal{B}$. The Q-network $Q_\theta$ is updated periodically by sampling minibatches from the buffer and minimizing the temporal difference (TD) loss, typically defined as:
\[
\mathcal{L}(\theta) = \mathbb{E}_{(s_t, a_t, r_t, s_{t+1}) \sim \mathcal{B}} \left[ \left( Q_\theta(s_t, a_t) - y_t \right)^2 \right],
\]
where \(y_t = R_t + \gamma \max_{a'} Q_{\theta^-}(s_{t+1}, a')\) is the TD target computed using a target network \(Q_{\theta^-}\), which is updated less frequently to stabilize training. 

If Player 2 is also an RL agent, it generates actions using its own Q-network, and its reward is defined symmetrically as \(R_t^2 = -R_t^1\), reflecting the zero-sum nature of the game.

\begin{figure*}
    \centering
    \includegraphics[width=0.95\linewidth]{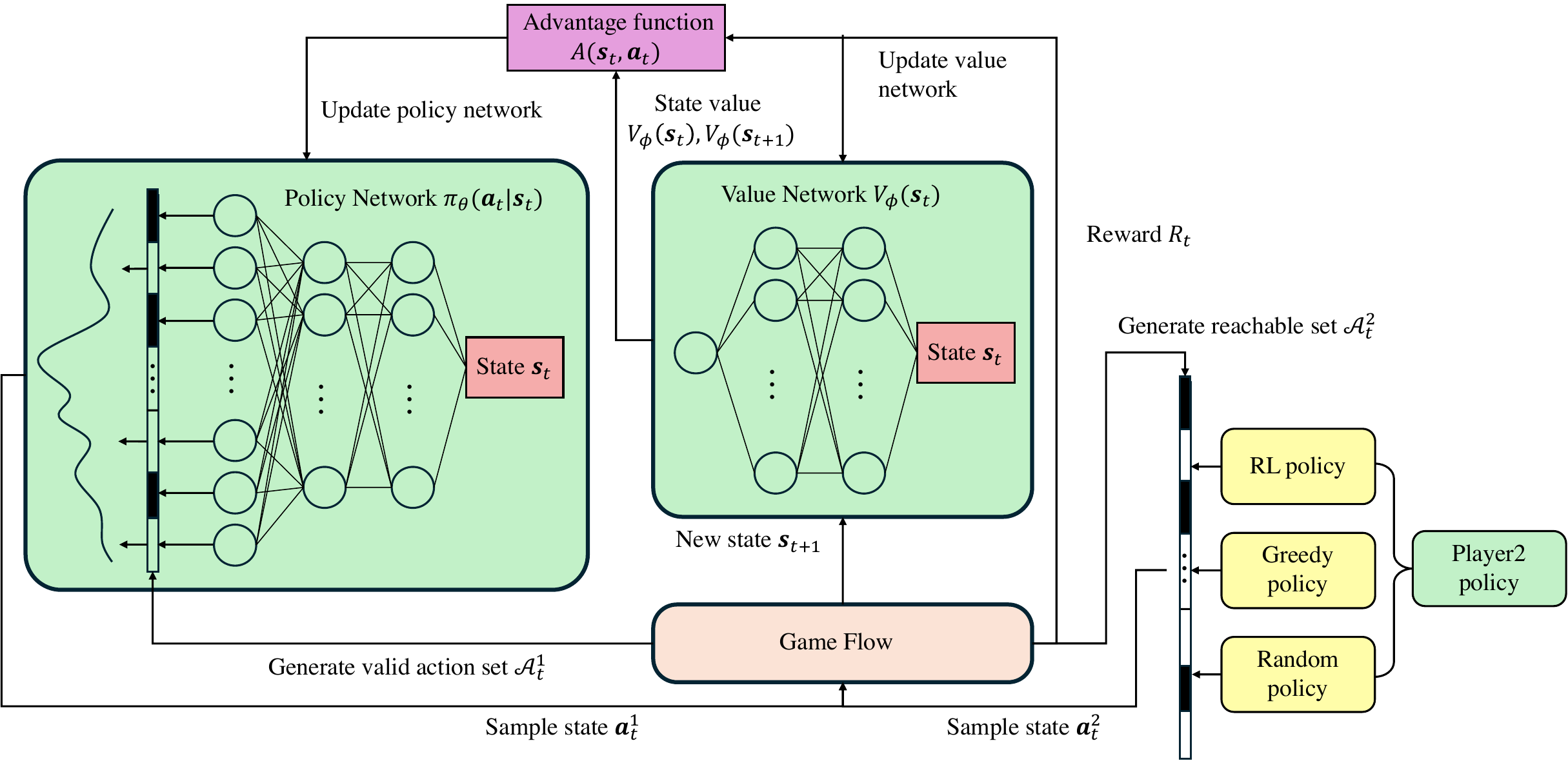}
    \caption{Game flow and PPO network with Player 1 as the RL agent. The game flow structure is the same as in Figure~\ref{fig:illus_2}.}
    \label{fig:illus_3}
\end{figure*}
\subsection{PPO Framework}
\label{subsec:ppo}
The PPO framework shares the same game flow as the DQN framework, as illustrated in Figure~\ref{fig:illus_3}. The difference is that PPO adopts an actor-critic structure with two separate deep neural networks: the policy network \(\pi_\theta\) (left), and the value network \(V_\phi\) (right). At each time step, the policy network serves as the actor, generating actions by sampling from the distribution \(\mathbf{a}_t \sim \pi_\theta(\mathbf{a}|\mathbf{s}_t)\), instead of selecting the action with the highest Q-value as in DQN. The value network estimates the expected return from state \(\mathbf{s}_t\) and is updated based on the observed reward signal \(R_t\).

During training, PPO collects entire episodes and uses them to compute the advantage estimates for each state-action pair, typically via the Generalized Advantage Estimation (GAE) method. The policy network $\pi_\theta$ is then updated using a clipped surrogate objective that restricts large policy updates, thereby ensuring training stability. Specifically, the objective includes a term that penalizes deviations from the previous policy beyond a specified trust region, which helps to prevent destructive updates.

In contrast to DQN, which uses an experience replay buffer and off-policy learning, PPO performs on-policy updates and trains the networks using fresh batches of collected trajectories. Moreover, while DQN updates the policy every few steps based on individual transitions, PPO performs batch updates after collecting multiple steps of interaction.

\section{Experiments}
We conduct experiments to evaluate the effectiveness of DQN and PPO in computing optimal policies for the GRAG across various graph structures, including $G_0, G_1, G_2, G_3$ and $G_4$ shown in Figure~\ref{fig:bias}. 
In our experiments, Player 1 is modeled as an RL agent, while Player 2 follows one of three strategies: a random policy, a greedy policy, or an RL-based policy.
\begin{figure*}
    \centering
    \subfloat[$G_0$]{
    \includegraphics[width=0.2\linewidth]{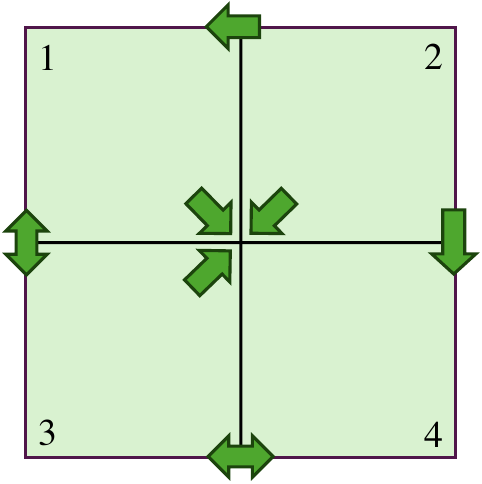}}
    \subfloat[$G_1$]{
    \includegraphics[width=0.21\linewidth]{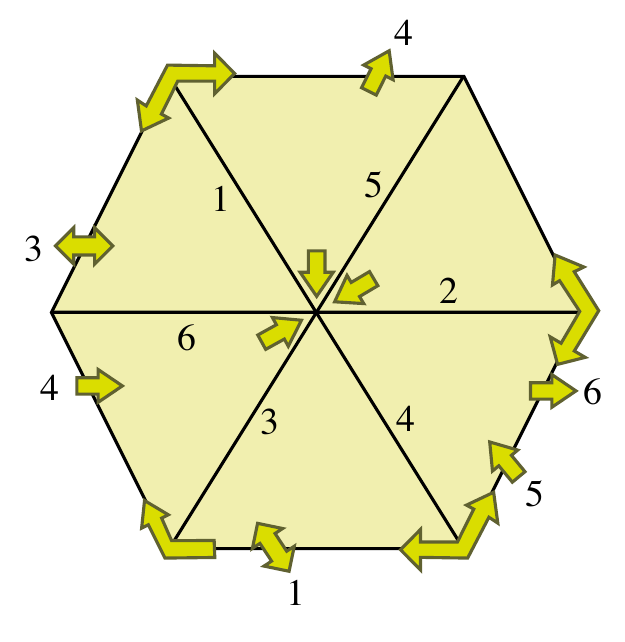}}
    \subfloat[$G_2$]{
    \includegraphics[width=0.2\linewidth]{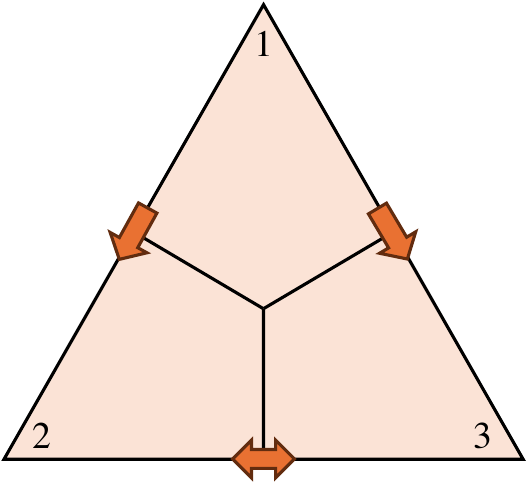}}
    \subfloat[$G_3$]{
    \includegraphics[width=0.2\linewidth]{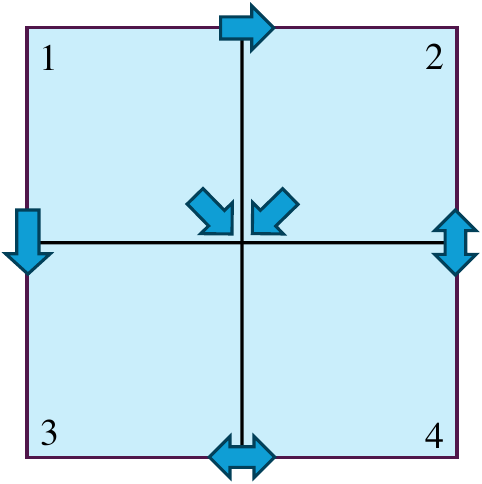}}
    \subfloat[$G_4$]{
    \includegraphics[width=0.19\linewidth]{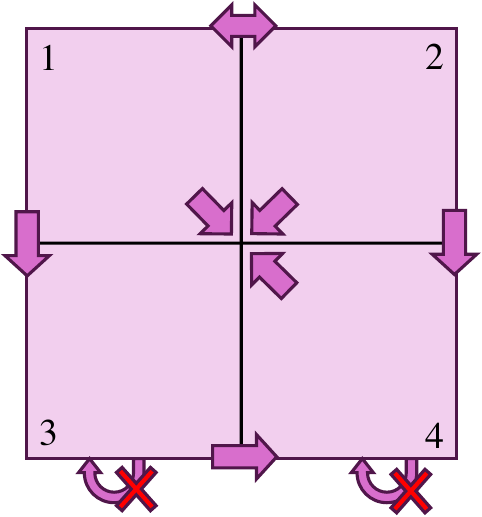}}
    \caption{Five graphs $G_0, G_1, G_2, G_3, G_4$, where a node is indexed numerically and represents a discrete location for resource allocation. Directed arrows between nodes indicate one-way connectivity, specifying the allowable transitions of resources between locations. }
    \label{fig:bias}
\end{figure*}

\begin{figure*}
    \centering
    \subfloat[$C_1, C_2, C_3, C_4$ (from left to right) on $G_0$]{
    \label{g0}
    \includegraphics[width=0.85\linewidth]{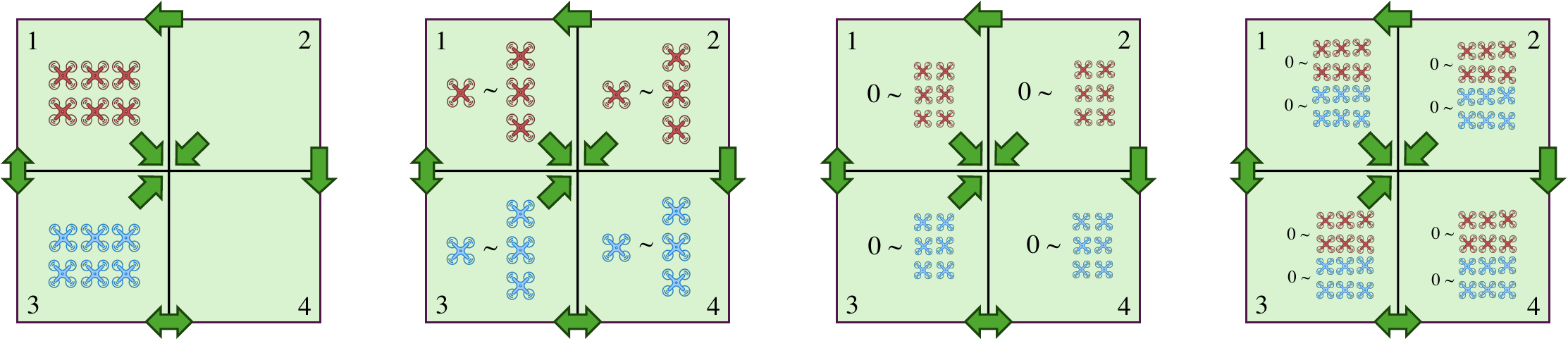}}

    \subfloat[$C_1, C_2, C_3, C_4$ (from left to right) on $G_1$]{
    \label{g1}
    \includegraphics[width=0.85\linewidth]{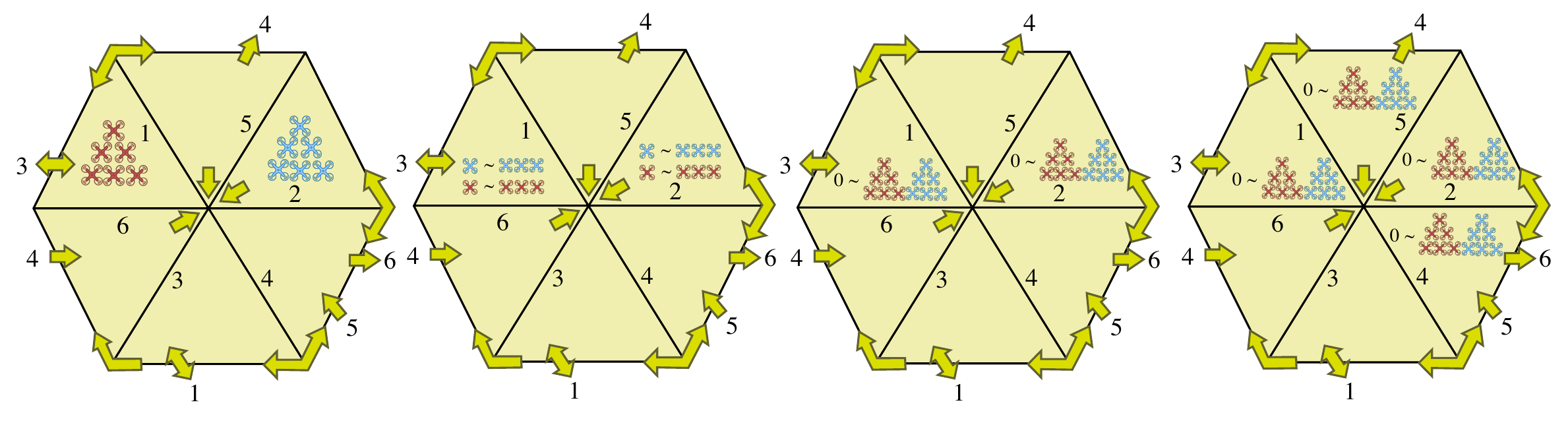}}
    \caption{Illustration of four distinct initial resource distributions $C_1$, $C_2$, $C_3$ and $C_4$ on graphs $G_0$ (\ref{g0}) and $G_1$ (\ref{g1}).}
    \label{fig:initial}
\end{figure*}
These five cases can be classified into two categories based on the prescribed initial resource distribution. $G_0$ and $G_1$ represent symmetric graphs, where the structural characteristics do not provide any inherent advantage to either Player 1 or Player 2, thereby constituting a fair game. Conversely, $G_2$, $G_3$, and $G_4$ are asymmetric graphs, where the initial resource distributions are deliberately crafted to impose a significant disadvantage on one player, while granting a corresponding advantage to the other, resulting in an inherently unfair competition. We assess the learning performance by comparing the win rates of both players under fair and unfair conditions. Specifically,
on $G_0$ and $G_1$, we consider four types of initializations of resource distribution $C_1\sim C_4$, from fixed to random. 
The four types for graphs $G_0$ and $G_1$ are illustrated in Figure~\ref{g0} and Figure~\ref{g1}, respectively.
For graph $G_0$, $C_1$ corresponds to a deterministic distribution where Player 1 allocates all resources to node 1, and Player 2 allocates all resources to node 3. In $C_2$, Player 1 allocates a random resource units from 1 to 3 to node 1, with the rest to node 2; Player 2 does the same to nodes 3 and 4. In $C_3$, Player 1 freely distributes resources to nodes 1 and 2, and Player 2 freely distributes resources to nodes 3 and 4. In $C_4$, both players can allocate resources freely across all nodes.
For graph $G_1$, $C_1$ is deterministic, with Player 1 allocating all resources to node 1, and Player 2 allocating all resources to node 2. In $C_2$, both players randomly allocate from 1 to 3 resource units to nodes 1 and 2. In $C_3$, both players freely distribute resources between nodes 1 and 2. In $C_4$, all resources can be freely allocated across nodes 1, 2, 4, and 5.
The prescribed initializations for unfair games on $G_2$, $G_3$, and $G_4$ will be elaborated in Section~\ref{subsec:rl-agent-g234}.


We build our environment using \textit{PettingZoo} \cite{terry2021pettingzoo}. \textit{PettingZoo} is a Python library for multi-agent reinforcement learning (MARL). It provides a standardized interface for various multi-agent environments. We utilize \textit{Tianshou} \cite{tianshou} for training.
\textit{Tianshou} is an RL library designed for efficient and flexible experimentation. Built on PyTorch, it provides a wide range of state-of-the-art RL algorithms and tools to simplify the development and evaluation of RL policies. 
    
\begin{table*}
\caption{Win rate of DQN against distinct policies with four distinct distribution initialization on $G_0$.}
    \centering
    \begin{tabular}{lrrrrrrrr}
        \toprule
        Initialization & \multicolumn{2}{l}{$C_1$} & \multicolumn{2}{l}{$C_2$} & \multicolumn{2}{l}{$C_3$} & \multicolumn{2}{l}{$C_4$} \\
        \midrule
        \diagbox{Train}{Test}&Trained&Random&Trained&Random&Train&Random&Trained&Random \\ 
        \midrule
        Random    & \multicolumn{2}{c}{80\%}     &\multicolumn{2}{c}{74\%}      &\multicolumn{2}{c}{67\%}     &\multicolumn{2}{c}{64\%}           \\
        Greedy $\pi_0$        & 100\%    &$\sim$50\%        &92\%&60\% &83\%&58\%& 71\%&62\%    \\
        Greedy $\pi_1$        & 100\%    &$\sim$50\%        &89\%&55\% &81\%&56\%& 69\%&63\%    \\
        Greedy $\pi_2$        & 100\%    &$\sim$50\%        &90\%&53\% &84\%&59\%& 65\%&59\%    \\
        RL                    & --       & --               &$\sim$50\%& 56\%&$\sim$50\%& 54\%& $\sim$50\%&53\% \\
        \bottomrule
    \end{tabular}
    \label{tab:DQN-1}
\end{table*}

\begin{table*}
\caption{Win rate of PPO against distinct policies with four distinct distribution initialization on $G_0$.}
    \centering
    \begin{tabular}{lrrrrrrrr}
        \toprule
        Initialization & \multicolumn{2}{l}{$C_1$} & \multicolumn{2}{l}{$C_2$} & \multicolumn{2}{l}{$C_3$} & \multicolumn{2}{l}{$C_4$}\\
        \midrule
        \diagbox{Train}{Test}&Trained&Random&Trained&Random&Train&Random&Trained&Random \\ 
        \midrule
        Random    & \multicolumn{2}{c}{68\%}     &\multicolumn{2}{c}{66\%}      &\multicolumn{2}{c}{59\%}     &\multicolumn{2}{c}{60\%}           \\
        Greedy $\pi_0$        & 100\%    &$\sim$50\%         &93\%&58\% &74\%&50\%& 64\%&53\%    \\
        Greedy $\pi_1$        & 100\%    &$\sim$50\%         &95\%&55\% &60\%&43\%& 62\%&53\%    \\
        Greedy $\pi_2$        & 100\%    &$\sim$50\%         &92\%&60\% &65\%&51\%& 63\%&45\%    \\
        RL                    & --       & --                &$\sim$50\%&37\% &$\sim$50 & 57\%& $\sim$50\%&44\% \\
        \bottomrule
    \end{tabular}
    \label{tab:PPO-1}
\end{table*}

\begin{table*}
\caption{Win rate of DQN against distinct policies with four distinct distribution initialization on $G_1$.}
    \centering
    \begin{tabular}{lrrrrrrrr}
        \toprule
        Initialization & \multicolumn{2}{l}{$C_1$} & \multicolumn{2}{l}{$C_2$} & \multicolumn{2}{l}{$C_3$} & \multicolumn{2}{l}{$C_4$} \\
        \midrule
        \diagbox{Train}{Test}&Trained&Random&Trained&Random&Train&Random&Trained&Random \\ 
        \midrule
        Random    & \multicolumn{2}{c}{77\%}     &\multicolumn{2}{c}{74\%}      &\multicolumn{2}{c}{65\%}     &\multicolumn{2}{c}{64\%}           \\
        Greedy $\pi_0$        & 100\%    &$\sim$50\%        &95\%&57\% & 78\%&60\% &83\%&62\%   \\
        Greedy $\pi_1$        & 100\%    &$\sim$50\%        &91\%&55\% & 80\%&58\% &81\%&61\%   \\
        Greedy $\pi_2$        & 100\%    &$\sim$50\%        &88\%&57\% & 77\%&61\% &84\%&63\%   \\
        RL                    & --       & --               &$\sim$50\%& 56\%&$\sim$50\% &45\% &$\sim$50& 39\%\\
        \bottomrule
    \end{tabular}
    \label{tab:DQN-2}
\end{table*}

\begin{table*}
\caption{Win rate of PPO against distinct policies with four distinct distribution initialization on $G_1$.}
    \centering
    \begin{tabular}{lrrrrrrrr}
        \toprule
        Initialization & \multicolumn{2}{l}{$C_1$} & \multicolumn{2}{l}{$C_2$} & \multicolumn{2}{l}{$C_3$} & \multicolumn{2}{l}{$C_4$}\\
        \midrule
        \diagbox{Train}{Test}&Trained&Random&Trained&Random&Train&Random&Trained&Random \\ 
        \midrule
        Random    & \multicolumn{2}{c}{65\%}     &\multicolumn{2}{c}{61\%}      &\multicolumn{2}{c}{59\%}     &\multicolumn{2}{c}{55\%}           \\
        Greedy $\pi_0$        & 100\%    &$\sim$50\%         &85\%&55\% & 78\%&51\% &74\%&50\%   \\
        Greedy $\pi_1$        & 100\%    &$\sim$50\%         &85\%&60\% & 74\%&53\% &60\%&43\%   \\
        Greedy $\pi_2$        & 100\%    &$\sim$50\%         &87\%&54\% & 75\%&58\% &65\%&51\%   \\
        RL                    & --       & --                &$\sim$50\%&-\% &$\sim$50\% &55\% &$\sim$50 & 40\%\\
        \bottomrule
    \end{tabular}
    \label{tab:PPO-2}
\end{table*}
\subsection{RL agent against random-policy agent on $G_0$ and $G_1$}
We set Player 1 to be the RL agent and Player 2 to be a random-policy agent.
In the case of fixed initialization $C_1$, the DQN agent can achieve an average of $77\%~80\%$ win rate, and the PPO agent can achieve an average of $65\%~68\%$ win rate. Both RL agents consistently outperform a random-policy agent, highlighting their effectiveness. 
When the initial distribution is randomized, the win rates decrease, as shown in the first row of Table~\ref{tab:DQN-1} $\sim$ \ref{tab:PPO-2}. However, both DQN and PPO still maintain a higher win rate compared to the random-policy agent, demonstrating their effectiveness for solving  GRAG on graphs.
\subsection{RL agent against greedy-policy agent on $G_0$ and $G_1$}
\label{subsec:greedy}
We further evaluate the RL agent’s performance against a greedy-policy agent through two types of experiments. In the first test, Player 1 is an RL agent trained against a greedy-policy agent, while Player 2 follows a greedy-policy strategy. In the second test, Player 1 remains the RL agent trained against the greedy-policy agent, but Player 2 adopts a random policy to assess the model's generalization capability.
The greedy policy is based on a pre-trained DQN model, where at each step, the agent selects the action that maximizes the pre-trained $Q'(\mathbf{s}, \mathbf{a})$ value, defined as:
\begin{equation*}
    \mathbf{a}_t=\arg \max_{\mathbf{a}} Q'(\mathbf{s}_t,\mathbf{a}).
    \label{eq:greedy}
\end{equation*}
We now describe the process of obtaining the pre-trained DQN model.

First, we train a DQN model against a random-policy agent, and obtain $Q_0(\mathbf{s}, \mathbf{a})$. Using this model, we derive a greedy policy: $\pi_0(\mathbf{a}|\mathbf{s}) = \arg \max_{\mathbf{a}} Q_0(\mathbf{s},\mathbf{a})$. Next, we set Player 2 to follow the greedy policy $\pi_0$ and train Player 1 (the RL agent) against it, resulting in an updated policy $Q_1(\mathbf{s}, \mathbf{a})$.
This iterative process continues, where Player 1 is repeatedly trained against the evolving greedy policy $\pi_i(\mathbf{a}|\mathbf{s})$, yielding a new DQN model $Q_{i+1}(\mathbf{s},\mathbf{a})$ at each iteration.

\textbf{DQN}: In case of deterministic distribution initialization $C_1$, we first obtain a series of pre-trained DQNs $Q_0, Q_1, Q_2$, and corresponding greedy policies $\pi_0, \pi_1, \pi_2$. Take $\pi_0$ for example, in the test stage, Player 1's win rate against greedy policy $\pi_0(\mathbf{a}|\mathbf{s})$ is $100\%$, as shown in the first column of Table~\ref{tab:DQN-1} and Table~\ref{tab:DQN-2}. This result occurs because, under a deterministic initial resource distribution $\mathbf{d}^1_0$ and $\mathbf{d}_0^2$, the initial state $\mathbf{s}_0=\mathbf{d}^1_0-\mathbf{d}_0^2$ remains constant. Therefore, the pre-trained DQN, $Q_0$ yields a predefined greedy action under deterministic initialization, meaning $\pi_0(\mathbf{a}|\mathbf{s}_0)=\arg\max_\mathbf{a}Q_0(\mathbf{s}_0,\mathbf{a})$ is a constant vector. Consequently, the pre-trained DQN $Q_0$ determines a deterministic greedy action under this specific initialization, meaning that $\pi_0(\mathbf{a}|\mathbf{s}_0)=\arg\max_{\mathbf{a}}Q_0(\mathbf{s}_0,\mathbf{a})$ is always a constant vector. As a result, the RL agent only needs to learn how to counter a single deterministic strategy based on $\mathbf{d}_0^2$, making it highly effective in this specific setting. However, when Player 1 competes against a random-policy agent during testing, its win rate drops to approximately $50\%$, indicating a significant struggle to generalize to more complex and diverse scenarios. The performance of the remaining greedy policies follows a similar pattern to $\pi_0$. We hypothesize that this lack of generalization arises due to the overly specific and narrow training conditions, which fail to expose the RL agent to a broader range of strategic variations.

For random distribution initializations $C_2$, $C_3$, and $C_4$, the results show that Player 1 maintains a relatively high win rate against the trained (greedy) agent and random-policy agent. This suggests that the DQN agent trained under random initialization exhibits a certain degree of generalization. In addition, as the level of initialization randomness increases, this generalization capability further improves, as shown in the second to the fourth rows of Table~\ref{tab:DQN-1} and Table~\ref{tab:DQN-2}.
Furthermore, when tested against various greedy policies ($\pi_0$, $\pi_1$, $\pi_2$), the DQN agent consistently achieves high win rates. It is also observed that, as the level of initialization randomness increases, the win rate against the trained (greedy) policies tends to decline. These results indicate that the level of initialization randomness plays a trade-off between the performance and generalization of the DQN agent.


\textbf{PPO}: Similarly, for random distribution initialization, we train PPO against various greedy policies $\pi_0,\pi_1,\pi_2$.
Unlike DQN, PPO training cannot be structured as a sequential process where each policy is trained against the previous one. Instead, PPO is trained independently against each fixed policy $\pi_i$. We evaluate whether PPO exhibits generalization after being trained on different fixed policies. The results are shown in Table~\ref{tab:PPO-1} and Table~\ref{tab:PPO-2}. The table reveals that PPO performs less effectively compared to DQN. The training results for PPO vary significantly across different greedy policies, and the PPO models trained against specific greedy policies struggle to perform well against the random policy, with win rates fluctuating around $50\%$.

We conjecture two key factors that may result in this outcome. First, the PPO, as an on-policy method, has lower sample efficiency, meaning it requires more training time and a larger dataset to achieve competitive performance. In contrast, DQN benefits from its off-policy nature, which allows for more efficient sample reuse, enabling it to converge faster. Second, the fundamental difference between value-based and policy-based learning plays a crucial role. DQN is inherently well-suited for GRAG, where the action space is discrete, making value-based methods more efficient. On the other hand, PPO is typically better at handling continuous action spaces, which may explain its comparatively weaker performance in this setting.


\subsection{RL agent against RL agent on $G_0$ and $G_1$}
When both players are RL agents, they update their policies based on each other's. 
In our setup, the state $\mathbf{s}$ is defined as the difference in resource distributions between the two players, i.e., $\mathbf{s}_t=\mathbf{d}_t^1-\mathbf{d}_t^2$. This value is identical for both players, requiring both agents to determine the optimal next action based on the same shared state. Each agent then takes an action $\mathbf{a}_t^1, \mathbf{a}_t^2$, updates the state $\mathbf{s}_{t+1}$, and receives rewards that are equal in magnitude but opposite in sign. Each agent uses these rewards to update their respective DNNs. This process repeats until the maximum number of iterations is reached.

We conducted multiple training experiments. The results vary across runs: sometimes, Player 1 achieves a higher win rate, while other times, Player 2 does. However, the win rates never significantly favor one side over the other, and the averaged win rates for either player approach $50\%$.
We attribute these fluctuations to the limited number of training iterations. If the training were sufficiently extensive, and the graph structure and initial resource distributions were symmetric, the win rates would likely converge to $50\%-50\%$. Conversely, if the graph or the initial resource distribution is asymmetric, the win rates might deviate from $50\%$. In such cases, the win rate imbalance reflects an inherent advantage for one player due to the asymmetry.

Additionally, we test the generalization performance of the models obtained through mutual training when faced with a random-policy agent. The DQN agent can achieve $62\%$ win rate, while the PPO agent only obtains $\sim 50\%$ win rate, as shown in Table~\ref{tab:DQN-1} $\sim$ \ref{tab:PPO-2}. This further indicates that DQN has better generalization capabilities.
\subsection{RL agent against random-policy agent on $G_0$ with different amount of resources}
We evaluate the performance of DQN and PPO in scenarios where the two players have unequal resources on graph $G_0$.
\begin{figure}
    \centering
    \subfloat[]{
    \label{g0_1}
    \includegraphics[width=0.8\linewidth]{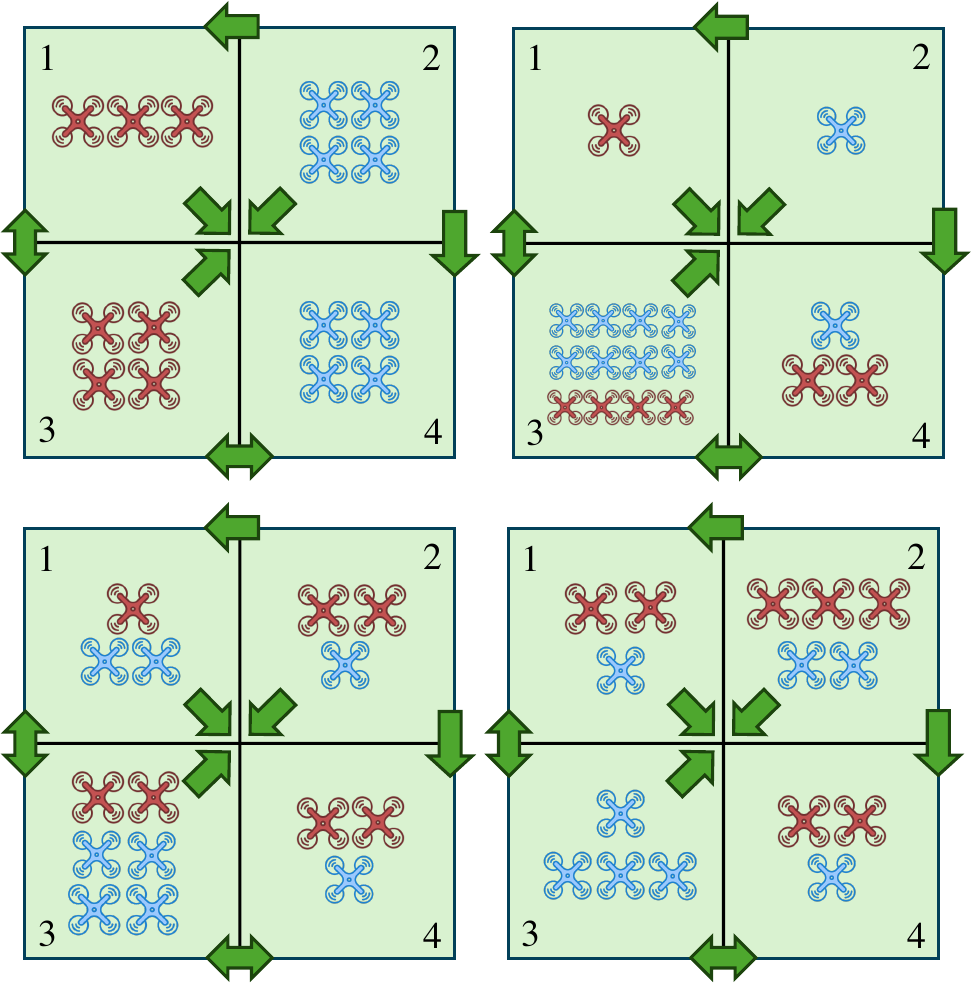}
    }
    
    \subfloat[]{
    \label{g0_2}
    \includegraphics[width=0.8\linewidth]{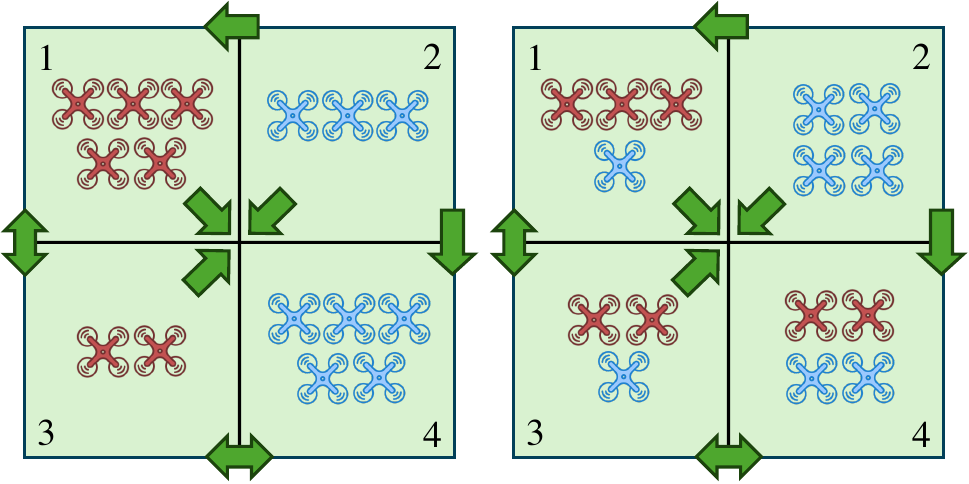}
    }
    \caption{Examples of gameplay on $G_0$ where Player 1 (red) has seven resources and Player 2 (blue) has eight resources.}
    \label{fig:m78}
\end{figure}
In our experiments, the RL agent (Player 1) is assigned 7 resources, while Player 2 has 8 resources. We use the case where both players follow a random policy as a baseline for comparison. Since win rates under random policies are primarily determined by resource quantities, we first examine the performance of Player 1 in this baseline setting. When both players follow random policies on the symmetric graph $G_0$, Player 1 achieves only a $17\%$ win rate due to its resource disadvantage. However, after applying DQN policy to Player 1, Player 1's win rate improves to $24\%$, and with PPO, it reaches $23\%$. These results demonstrate that DQN and PPO significantly enhance performance, even when Player 1 has fewer resources than its opponent.

In Figure~\ref{fig:m78} we present two example episodes in which the RL-trained Player 1 (red) competes against a random-policy Player 2 (blue) on the graph. 
In Fig~\ref{g0_1}, Player 1 starts on nodes 2 and 3, and Player 2 starts on nodes 1 and 4. As the episode progresses, Player 1 attempts to reinforce node 4, but loses control of node 3. As Player 2 allocates more resources to node 3, Player 1 gives up node 3 and wins by controlling all the other nodes.
In Fig~\ref{g0_2}, they begin with a different random initial configuration. Here, Player 1 starts on nodes 1 and 4. Leveraging its trained policy, Player 1 reinforces these positions effectively, while Player 2's random actions result in less coordinated movement. As a result, Player 1 gains control over a majority of the nodes and secures a win.
These examples highlight that, although Player~1 has fewer resources, its learned strategy can exploit certain favorable initial states to outperform a random opponent. This demonstrates the effectiveness of RL in generating competitive allocation strategies even with fewer resources.

\subsection{RL agent against random-policy agent on $G_2, G_3$ and $G_4$}
\label{subsec:rl-agent-g234}
\begin{table}
\caption{Performance of DQN agents and PPO agents trained on biased graphs $G_2,G_3,G_4$.} 
    \centering
    \begin{tabular}{lrrrr}
        \toprule
                            &$G_0$& $G_2$   & $G_3$     &$G_4$\\
        \midrule
        DQN on Player 1      &65\%& 100\%     &86\%        &83\%      \\
        DQN on Player 2      &68\%& 33\%      &52\%        &31\%      \\
        \midrule
        PPO on Player 1      &55\%& 100\%     & 72\%       &76\%      \\
        PPO on Player 2      &54\%& 25\%      &47\%        &26\%      \\
        \midrule
        Random               &50\%& 20\%      &41\%        &21\%      \\
        \bottomrule
    \end{tabular}
    \label{tab:result3}
\end{table}

We further verify the RL approach on three asymmetric graphs, $G_2, G_3, G_4$. 
For $G_2$, to ensure a certain level of randomness, we initialize Player 1's resources across node 0 and node 1, and Player 2's across node 1 and node 2. To maintain a tie initial state, the number of resources allocated to node 1 by both players is kept identical. It is worth noting that since there is no path in $G_1$ leading to node 0, the setup inherently favors Player 1. This is because Player 1 can maintain permanent control over node 0 by allocating just 1 unit of resources there. 
We evaluate the performance of DQN and PPO against a random-policy agent. The results show that both DQN and PPO achieved a $100\%$ win rate against the random-policy agent. 
\begin{figure}
    \centering
    \subfloat[]{
    \label{g2}
    \includegraphics[width=0.8\linewidth]{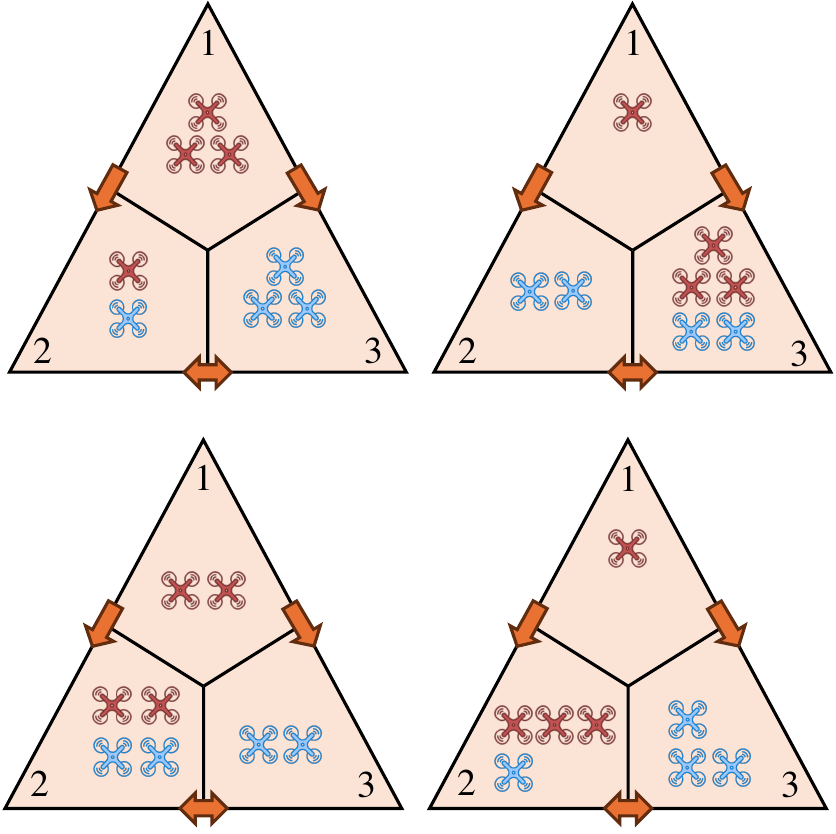}}
    
    \subfloat[]{
    \label{g4}
    \includegraphics[width=0.8\linewidth]{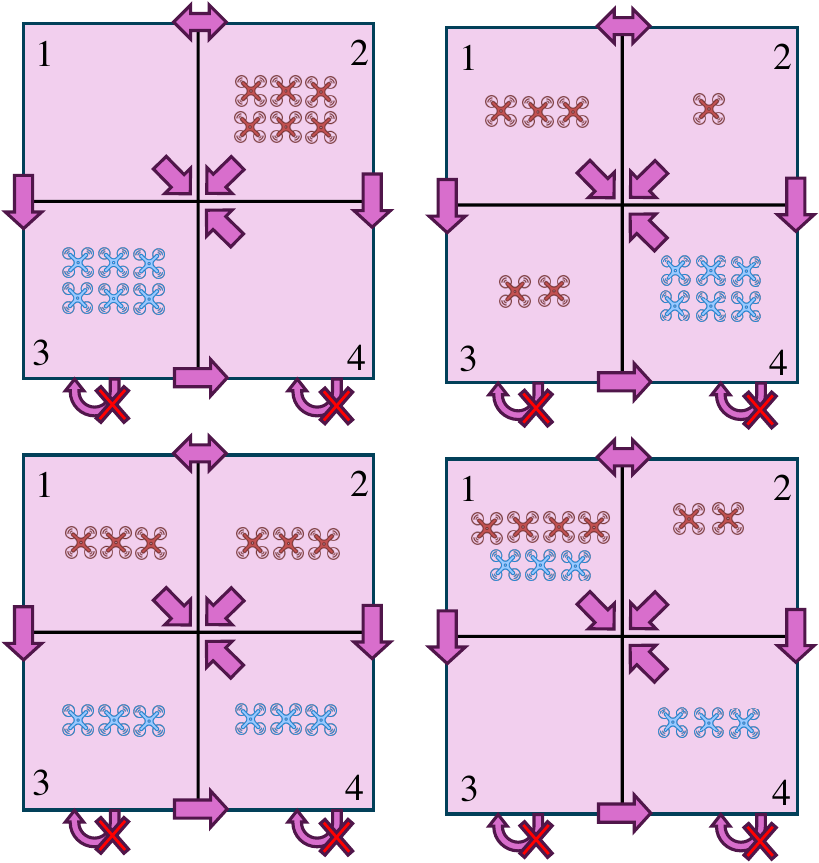}}
    \caption{Examples of gameplay on $G_1$ (\ref{g2}) and $G_3$ (\ref{g4}) where the two players have the same units of resources.}
    \label{m44}
\end{figure}
We extend a similar scenario to $G_3$, where Player 1's initial resources are randomly distributed between node 0 and node 1, and Player 2's initial resources are randomly distributed between node 2 and node 3. 
This initial resource distribution setup clearly continues to favor Player 1. 
Experimental results in Table~\ref{tab:result3} demonstrate that DQN and PPO perform significantly better in asymmetric graph $G_2$ compared to symmetric graph $G_0$. This indicates that DQN and PPO can take advantage of the graph structure. Specifically, DQN achieves an $86\%$ win rate, and PPO achieves a $71\%$ win rate, as presented in Table~\ref{tab:result3}.

On $G_4$, nodes 2 and 3 cannot retain any resources, specifically $\mathbf{H}^{G_3}_{22}=\mathbf{H}^{G_3}_{33}=0$. It means any resource on node 2 or node 3 at time step $t$ must be reallocated to other nodes at $t+1$. In addition, we randomly distribute Player 2's resources on nodes 2 and 3. Therefore, Player 2's action at the first step is predictable, so that $G_3$ is unfavorable to her.

As shown in Table~\ref{tab:result3}, DQN and PPO achieve a higher win rate compared with that in the symmetric case $G_0$. This again demonstrates DQN and PPO can take advantage of the asymmetric structure of the graph. 
These three cases demonstrate that both DQN and PPO can effectively leverage the structural advantages of the graph to improve their win rates.

Conversely, in all the scenarios above where the structure of the graph inherently puts Player 2 at a disadvantage, we also run simulations with Player 2 as the RL agent.
If both players follow random policies, their win rate is completely determined by the graph structure. Table~\ref{tab:result3} shows that if both players follow random policies, the win rate of Player 2 is only $20\%$ on $G_2$, $41\%$ on $G_3$, and $21\%$ on $G_4$. However, with DQN and PPO policies, the Player 2 win rate significantly improves to $33\%$ on $G_2$, $52\%$ on $G_3$, and $31\%$ on $G_4$ for PPO, and $25\%$ on $G_2$, $47\%$ on $G_3,$ and $26\%$ on $G_4$ for PPO. This verifies that DQN and PPO can still learn effective policies even on the graphs that are unfavorable.

Figure~\ref{m44} shows two example episodes (top and bottom rows) played on two asymmetric graphs: $G_2$ (Fig~\ref{g2}) and $G_4$ (Fig~\ref{g4}). In each episode, Player 1 (red) is a trained RL agent, while Player 2 (blue) follows a random policy. 
In both episodes on $G_1$, we observe that Player 1 consistently keeps a small portion of her resources on node 0, effectively securing permanent control of this isolated node. This strategic choice is reinforced by the structure of $G_2$, where node 0 is disconnected from the rest of the graph, making it unreachable for Player 2. The remaining resources are then used to contest Player 2 over the connected portion of the graph, resulting in a win for Player 1 in both cases.
On $G_4$, the structure imposes another type of asymmetry. Nodes 2 and 3 cannot retain resources across time steps, forcing any resources placed there to be reallocated immediately. Since Player 2’s resources are initially placed on these unstable nodes, its early actions become highly predictable. As shown in both episodes, Player 1 successfully exploits this predictability by strategically positioning its forces, eventually securing a majority and winning the game.
These visualizations demonstrate that the RL agent can exploit structural advantages in asymmetric graphs to outperform a random-policy opponent. This supports the claim that RL not only adapts to RL but also capitalizes on inherent graph biases to maximize winning potential.
\section{Conclusion}
In this paper, we leverage RL approaches, specifically DQN and PPO, to learn optimal policies for the GRAG. To ensure that actions taken during training adhere to graph constraints, we construct an action-displacement adjacency matrix and derive a valid action set from it. On the symmetric graph, the high win rates of DQN and PPO against both random-policy and greedy-policy agents demonstrate the effectiveness of RL in solving the GRAG. Additionally, a player trained using DQN against a deterministic-policy player achieves a high win rate against a random-policy player, highlighting a certain degree of generalization. On asymmetric graphs, DQN and PPO enable the advantaged player to maximize their strategic edge, often achieving near-perfect win rates. Conversely, while the disadvantaged player faces structural constraints or resource disadvantages, RL training still enhances their performance, resulting in a higher win rate compared to untrained strategies. This demonstrates that PPO and DQN can effectively exploit structural advantages while mitigating both structural and resource-related disadvantages.

Future research will explore several extensions of our RL-based framework for GRAG. First, we plan to evaluate alternative RL algorithms, including Double DQN \cite{van2016deep}, Dueling DQN \cite{wang2016dueling}, Prioritized Experience Replay (PER) \cite{schaul2015prioritized}, and Deep Deterministic Policy Gradient (DDPG) \cite{hou2017novel}, and conduct experiments across diverse multi-agent reinforcement learning (MARL) simulators. Second, we intend to introduce node-specific weights to reflect varying levels of strategic importance across the graph. This extension will require adapting our model to account for weighted nodes, and we aim to investigate whether RL methods can effectively optimize strategies under these conditions. Third, 
while resources are currently allocated in discrete units, we aim to extend our framework to handle scenarios where resources are treated as continuous variables. Finally, we aim to address the case of heterogeneous resources, where each player controls multiple resource types that exhibit mutual counteractions~\cite{an2023double}. This setting introduces additional strategic complexity, and we will explore the capability of RL techniques to manage resource interdependencies in this type of resource allocation game.
\bibliography{ref}
\bibliographystyle{IEEEtran}


\vfill

\end{document}